\newif\iffinal
\newif\ifarxiv
\def\NewTheorem#1#2{%
  \newaliascnt{#1}{theorem}
  \newtheorem{#1}[#1]{#2}
  \aliascntresetthe{#1}
  \expandafter\def\csname #1autorefname\endcsname{#2}
}
 \newtheorem{theorem}{Theorem}[section]
\newcommand{\hide}[1]{}
\DeclareMathOperator{\lmax}{\ell_{max}}
\definecolor{darkred}{rgb}{0.5,0,0}
\definecolor{lightblue}{rgb}{0,0.4,0.8}
\definecolor{darkgreen}{rgb}{0,0.5,0}
\begin{document}

\title{Abstraction in Neural Networks}

\author{Nancy Lynch \\
	Massachusetts Institute of Technology \\
	Department of Electrical Engineering and Computer Science \\
    32 Vassar St, Cambridge, MA 02139 \\ 
    \texttt{lynch@csail.mit.edu}\footnote{This work was supported by the National Science Foundation under grants CCR-2139936 and CCR-2003830.} \\	
}

\date{August 4, 2024}
\maketitle

\begin{abstract}
We show how brain networks, modeled as \emph{Spiking Neural Networks}, can be viewed at different levels of abstraction. 
Lower levels include complications such as failures of neurons and edges.
Higher levels are more abstract, making simplifying assumptions to avoid these complications.
We show precise relationships between the executions of networks at the different levels, which enables us to understand the behavior of the lower-level networks in terms of the behavior of the higher-level networks.

We express our results using two abstract networks, $\mathcal{A}_1$ and $\mathcal{A}_2$, one to express firing guarantees and the other to express non-firing guarantees, and one detailed network $\mathcal{D}$.  
The abstract networks contain reliable neurons and edges, whereas the detailed network has neurons and edges that may fail, subject to some constraints.
Here we consider just initial stopping failures.

To define these networks, we begin with abstract network $\mathcal{A}_1$ and modify it systematically to obtain the other two networks.
To obtain $\mathcal{A}_2$, we simply lower the firing thresholds of the neurons. 
To obtain $\mathcal{D}$, we introduce failures of neurons and edges, and incorporate redundancy in the neurons and edges in order to compensate for the failures.
We also define corresponding inputs for the networks, and corresponding executions of the networks.

We prove two main theorems, one relating corresponding executions of $\mathcal{A}_1$ and $\mathcal{D}$ and the other relating corresponding executions of $\mathcal{A}_2$ and $\mathcal{D}$.
Together, these give both firing and non-firing guarantees for the detailed network $\mathcal{D}$.  
We also give a third theorem, relating the effects of $\mathcal{D}$ on an external reliable actuator neuron to the effects of the abstract networks on the same actuator neuron. 
\end{abstract}

\section{Introduction}


This work is part of an ongoing effort to understand the behavior of brain networks in terms of mathematical distributed algorithms, which we call \emph{brain algorithms}.  The overall aim is to describe mechanisms that are realistic enough to explain actual brain behavior, while at the same time simple and clear enough to enable simulations and mathematical analysis.
Our prior work on this project includes (among other things) study of brain decision-making using \emph{Winner-Take-All} mechanisms~\cite{LMP19, SuCL19}, data compression using random projection~\cite{DBLP:conf/innovations/HitronLMP20}, learning and recognition for hierarchically-structured concepts~\cite{LM21, DBLP:conf/sirocco/LynchM23}, and the interaction between symbolic and intuitive computation~\cite{lynch2022symbolicknowledgestructuresintuitive}.
We have also worked on defining mathematical underpinnings of the area, in terms of simple \emph{Spiking Neural Network (SNN)} models~\cite{Lynch2022}.


An obstacle to work in the area is that real brain networks have complex structure and behavior, including such complications as neuron and synapse failures.
Theoretical models that capture such complexity are similarly complicated.
For example, work by Valiant~\cite{Valiant}, work on the \emph{assembly calculus}~\cite{PapadimitriouVempala, PVMM20}, and some of our earlier work~\cite{lynch2024multineuron} view the world in terms of individual "concepts", and represent each concept with many neurons, using redundancy to mitigate the effects of various anomalies.
However, designing, understanding, and analyzing algorithms in terms of such complex models can be difficult.

To make algorithm design and analysis easier, much of the theoretical work on brain algorithms, including ours, has assumed simplified models, in which network elements are reliable.
But that means that there is a gap between the simplified theoretical models and the more complicated actual brain networks.


This paper illustrates one approach to bridging that gap, namely, describing brain algorithms in two ways, at different \emph{levels of abstraction}.  Lower levels can include complications such as failures and noise.
Higher levels can be more abstract, making simplifying assumptions to avoid some of the complications.
The problem then is to reconcile the two levels, by relating the complex, detailed models to the abstract models.
This approach breaks down the study of brain algorithms into two parts:  studying the abstract models on their own, and mapping the detailed models to the abstract models.
Such an approach is common in other areas of computer science, such as the theory of distributed systems; the book~\cite{10.5555/2821576} contains some examples.
The approach is well developed in the formal theory of concurrency; versions of this approach are described (for example) in~\cite{DBLP:conf/podc/LynchT87, DBLP:series/synthesis/2010Kaynar, DBLP:journals/njc/SegalaL95}.

\paragraph{Background:}
The present paper has evolved from two previous papers~\cite{lynch2024multineuron, lynch2024abstraction}.
The first of these~\cite{lynch2024multineuron} describes representation of hierarchically structured concepts in a layered neural network, in such a way as to tolerate some random neuron failures and random connectivity between consecutive layers and within layers.
The representations use multiple neurons for each concept.
The representations are designed to support efficient recognition of the concepts, and to be easily learned using Hebbian-style learning rules.
The analysis in~\cite{lynch2024multineuron} is fairly involved, requiring considerable probabilistic reasoning, mainly Chernoff and union bounds.

The paper~\cite{lynch2024abstraction} takes another look at most of the results of~\cite{lynch2024multineuron}, reformulating them in terms of levels of abstraction.
It defines abstract networks that represent each concept with a single reliable neuron, and include full connectivity between representing neurons in consecutive layers.
Detailed networks include neuron failures and reduced connectivity.
Correctness requirements are formulated for both abstract and detailed networks.  Correctness of abstract networks is argued directly, whereas correctness of detailed networks is shown by mapping the detailed networks to corresponding abstract networks, using formal mappings.
One simplification here is that we replace the randomness of failures and connectivity from~\cite{lynch2024multineuron} with constraints that could be shown to hold with high probability, based on Chernoff arguments.
This allows us to avoid explicit probabilistic reasoning.


\paragraph{What we do here:}
In this paper, we extend the results of~\cite{lynch2024abstraction}.
Now instead of considering just layered networks that represent hierarchical concepts, we consider arbitrary neural network digraphs.
In~\cite{lynch2024abstraction}, we compared abstract and detailed networks, for the special case of hierarchical concept graphs.  But then we observed that the ideas are more general, and could apply to a larger class of abstract networks.
This is what we explore here.

As it turns out, we make some changes from the approach in~\cite{lynch2024abstraction}, in order to simplify the presentation.
In particular, we unify the treatment of neuron failures and connectivity by considering two kinds of failures, neuron failures and edge failures.
This makes sense since now we are considering network digraphs with arbitrary connectivity.
Also, we do not try to extract general, formal notions of mappings between arbitrary networks at different levels of abstraction, but just present our mapping claims within the statements of the mapping theorems.


However, most of the treatment is analogous.
We again use two abstract networks, $\mathcal{A}_1$ and $\mathcal{A}_2$, one to express firing guarantees and the other to express non-firing guarantees, and a detailed network $\mathcal{D}$.  
The abstract networks contain reliable elements (neurons and edges), whereas $\mathcal{D}$ has neurons and edges that may fail, subject to some constraints.
As before, we consider just initial stopping failures.

To define these networks, we begin with the abstract network $\mathcal{A}_1$ and modify it systematically to obtain the other two networks.
To obtain $\mathcal{A}_2$, we simply lower the firing thresholds of the neurons. 
To obtain $\mathcal{D}$, we introduce failures of neurons and edges, with certain constraints, and incorporate redundancy in the neurons and edges in order to compensate for the failures.
We also define corresponding input conventions for the networks at different levels of abstraction, and use these to define corresponding executions of the different networks.

Then we prove two \emph{mapping theorems}, one relating corresponding executions of $\mathcal{A}_1$ and $\mathcal{D}$ and the other relating corresponding executions of $\mathcal{A}_2$ and $\mathcal{D}$.
Together, these give both firing and non-firing guarantees for the detailed network $\mathcal{D}$, based on corresponding guarantees for the abstract networks.  We give one more theorem, relating the effects of $\mathcal{D}$ on an external \emph{actuator neuron}, to the effects of the abstract networks on the same actuator neuron. 


The results in this paper are presented in terms of a particular Spiking Neural Network model, in which the state of a neuron simply indicates whether or not the neuron is currently firing, and a particular failure model, with initial stopping failures.
It should be possible to extend the results to more elaborate failures.
Moreover, similar results should be possible for other types of network models, such as those in which neurons have real-valued state components representing their activity levels.
%
The key idea in all cases is that the abstract models are reliable, with reliable neurons and edges.  The detailed models include failures of neurons and edges, and include multiple neurons to compensate for the failures.  The key results in all cases should be mappings that give close relationships between the behaviors of the detailed and abstract networks.


\paragraph{Roadmap:}
The rest of the paper is organized as follows.
In Section~\ref{sec: network}, we present our general network models, both abstract and detailed.
In Section~\ref{sec: correspondence}, we define correspondences between the abstract and detailed networks.
Section~\ref{sec: mappings} contains our mapping theorems.
Finally, Section~\ref{sec: conclusions} contains our conclusions.

\section{Network Model}
\label{sec: network}

We consider abstract and detailed versions of the network model. 
The only formal difference is that the detailed models include failures of neurons and edges, while in the abstract model, these components are reliable.

\subsection{Abstract network model}

An instance of the abstract model is a weighted directed graph $G = (V,E)$ with possible self-loops.
Each node of the digraph has an associated \emph{neuron}, which is modeled as a kind of state machine.
Some of the neurons are designated as \emph{input neurons}.
Input neurons do not have any incoming edges, in particular, they have no self-loops.

Each neuron $v$ has the following state component:
\begin{itemize}
    \item $firing$, with values in $\{0,1\}$.
\end{itemize} 
$firing = 1$ means that neuron $v$ is firing, and $firing = 0$ means that it is not firing.
Each non-input neuron also has the following associated information:
\begin{itemize}
    \item $threshold$, a real number.
\end{itemize}
For a neuron $v$, we write $firing^v$ and $threshold^v$.

Each edge has the following associated information:
\begin{itemize}
    \item $weight$, a real number.
\end{itemize}
For an edge $e = (u,v)$, we write $weight^e$ or $weight^{(u,v)}$.
The weights might be negative, which is how we model inhibition.

We will use notation $u,v,w,x,y,z$ for neurons.
When we want to make a distinction, we will use $u,v,w$ for neurons in abstract networks and $x,y,z$ for neurons in detailed networks.

Inputs are presented to the network at the input neurons. 
The input consists of setting a subset of the input neurons' $firing$ state compoonents to $1$, and the others to $0$, at each time $t$. 
We assume that the $firing$ state components of the input neurons are set by some external force.
We write $firing^v(t)$ for the value of the $firing$ state component of neuron $v$ at time $t$.


A \emph{configuration} of the network is a mapping from the neurons in the network to $\{0,1\}$.  This gives the $firing$ state component values of all of the neurons in the network.
The \emph{initial configuration} of the network is determined as follows.
For input neurons, the state (i.e., the value of the $firing$ component) is determined externally.  For non-input neurons, it is specified as part of the network definition.

Given a configuration $C$ of the network at time $t$, which we denote by $C(t)$, we determine the next configuration $C(t+1)$ as follows.  
For each input neuron $v$, the value of $firing^v$ in configuration $C(t+1)$ is determined as specified above for inputs, by an external source.
For each non-input neuron $v$, we compute the \emph{incoming potential} based on the firing state components of incoming neighbors $u$ of $v$, which we call $innbrs(v)$, in configuration $C(t)$ and the intervening edge weights, as:
\[
pot^v(t+1) = \Sigma_{u \in innbrs(v)} \ firing^u(t) \  weight(u,v).
\]
Then we set $firing^v(t+1)$ to $1$ if $pot^v(t+1) \geq threshold^v$, and $0$ otherwise.

An \emph{execution} of the network is a finite or infinite sequence $C(0), C(1), \ldots$, where $C(0)$ is an initial configuration, and each $C(t+1)$ is computed from $C(t)$ using the transition rule defined above.

\subsection{Examples}

We give three example abstract networks:  a line, a ring, and a hierarchy.  


\begin{example}
\label{ex: line1}
\emph{Line:}
The line graph $G = (V,E)$ is a line of $\lmax+1$ neurons,  numbered $0,1,2,\ldots,\lmax$.
Neuron $0$, which we imagine as the leftmost neuron, is the only input neuron.
The edges are all those of the form $(v,v+1)$, i.e., all the left-to-right edges between consecutively-numbered neurons.
All of the edge weights are $1$, and the thresholds for all the non-input neurons are $1$.
See Figure~\ref{fig: ex-figure: line1} for a depiction of the network, showing neuron numbers, thresholds and edge weights, for the special case where $\lmax = 5$.
\begin{figure}[t]
\vspace{-3cm}
\includegraphics[width = 6.5in]{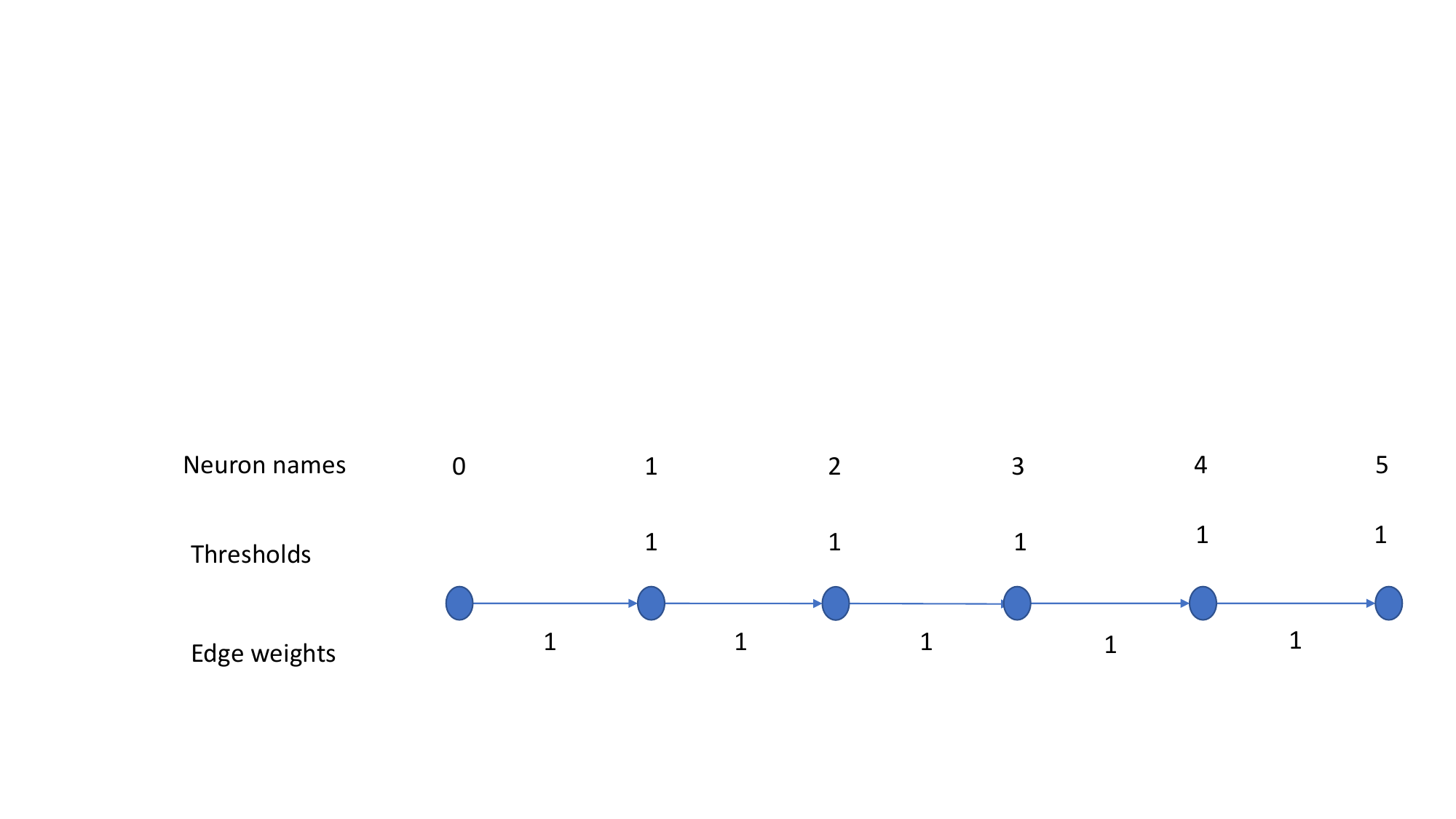}
\vspace{-2cm}
\caption{Line network, for $\lmax = 5$.}
\label{fig: ex-figure: line1}
\end{figure}

Every non-input neuron $v$ has its initial firing state component $firing^v(0)$ equal to $0$.
As for inputs, we assume that the input neuron $0$ fires at time $0$, and at no other time.
No other neurons fire at time $0$, because of the initialization condition just described.

The behavior of this network is as follows.  At each time $t$, $0 \leq t \leq \lmax$, neuron $t$ fires, and no other neuron fires.  
From time $\lmax+1$ onward, no neuron fires.
That is, one-time firing of the input neuron at time $0$ leads to a wave of firing that moves rightward across the line, one neuron per time step.
Each neuron fires just once.

Alternatively, we could assume that the input neuron $0$ fires at every time starting from time $0$.  In this case, for each time $t$, all neurons with numbers $\leq t$ will fire.  Thus, we get persistent firing for each neuron, once it gets started.

Another way of getting persistent firing is via self-loops on the non-input neurons.
For example, if we add just one self-loop, on neuron $1$, with weight $1$, then a single input pulse at time $0$ will lead to persistent firing for each neuron with number $\geq 1$, once it gets started.
That is, at each time $t$, all non-input neurons with numbers $\leq t$ will fire.
\end{example}

\begin{example}
\label{ex: ring1}
\emph{Ring:}
The ring graph $G = (V,E)$, consists of an input neuron, numbered $0$, plus a one-directional (clockwise) ring consisting of non-input neurons $1,2,\ldots,\lmax$.
The edges are all those of the form $(v,v+1)$, for $0 \leq v \leq \lmax-1$, plus the edge $(\lmax,1)$.  All of the edge weights are $1$, and the thresholds for all the non-input neurons are $1$.  See Figure~\ref{fig: ex-figure: ring1} for a depiction of the network, for the special case where $\lmax = 5$.

\begin{figure}[t]
\vspace{-1cm}
\includegraphics[width = 6.5in]{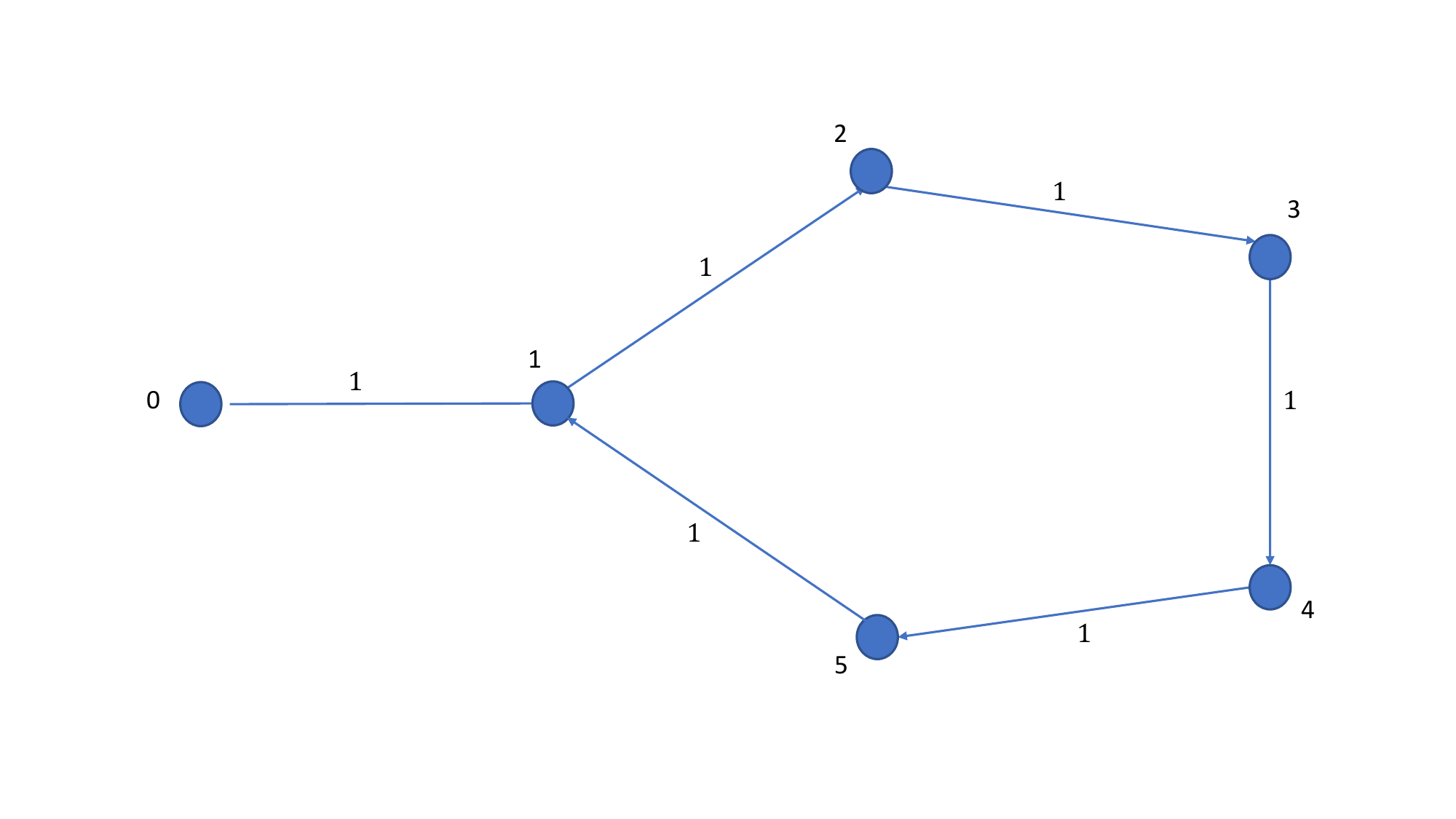}
\caption{Ring network, for $\lmax = 5$.  Neuron names and edge weights are shown.  Thresholds of non-input neurons are all $1$.}
\label{fig: ex-figure: ring1}
\end{figure}

Every non-input neuron has initial firing state component equal to $0$.
For inputs, we assume that the input neuron $0$ fires at time $0$, and at no other time. No other neurons fire at time $0$.

This network exhibits a periodic, circular firing pattern.
Neuron $0$ fires at time $0$ and at no other time. 
Each neuron $v$, $1 \leq v \leq \lmax$ fires at exactly times $t = v \bmod \lmax$.

As for the line example, continual input firing would lead to continual firing of the other neurons, once they get started.
Alternatively, adding a self-loop to neuron $1$ would achieve the same effect on neurons $1,2,\ldots,\lmax$.
\end{example}


\begin{example}  
\label{ex: hierarchy1}
\emph{Hierarchy:}
Hierarchy graphs were the only type of graph used in~\cite{lynch2024abstraction}.
Here we consider a directed graph $G = (V,E)$ that is a tree with $\lmax + 1$ levels, numbered $0,1,\ldots, \lmax$, with just one root neuron, $v_{\lambda}$, at the top level, level $\lmax$. 
Each non-leaf node has exactly $k$ children.
Thus, there are a total of $k^{\lmax}$ leaves.
The leaf neurons are the input neurons; all others are non-input neurons.

Edges are directed upwards, from child neurons to their parents, and have weight $1$.
The threshold for every non-input neuron is $r k$, where $r$ is a parameter, $0 < r \leq 1$.
This threshold is designed so that a non-input neuron will fire in response to the firing of at least an $r$-fraction of its children, thus, it fires in response to partial information.
Figure~\ref{fig: ex-figure: hierarchy1} contains a depiction of the network, in the special case where $\lmax = 3$, $k = 3$, and $r = 2/3$.
The thresholds are $r k = (2/3) 3 = 2$.

\begin{figure}[t]
\vspace{-1cm}
\includegraphics[width = 6.5in]{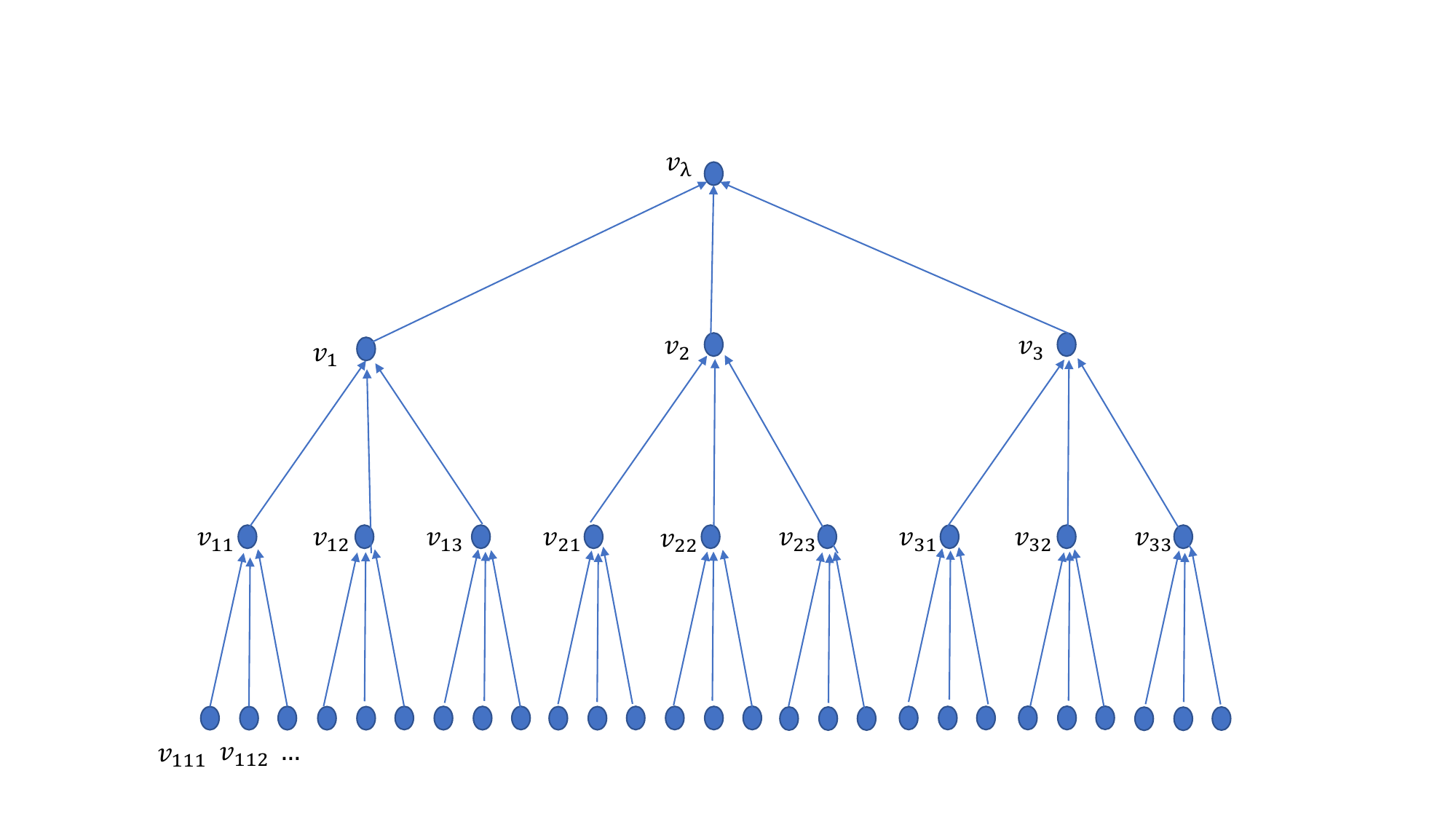}
\caption{Hierarchy network, for $\lmax = 3$ and $k = 3$.  Neuron names are shown.  Thresholds of non-input neurons are all $2$.  Edge weights are all $1$.}
\label{fig: ex-figure: hierarchy1}
\end{figure}

For inputs, we assume that some arbitrary subset $B$ of the leaf neurons fire at time $0$, and at no other time.
Then a wave of firing propagates upward, with a subset of the neurons in level $\ell$ firing at time $\ell$.  
The set of neurons that fire can be defined recursively:
a level $\ell$ neuron fires at time $\ell$ exactly if at least $r k$ of its children fire at time $\ell-1$.

In the special case shown in Figure~\ref{fig: ex-figure: hierarchy1}, suppose that $B$ consists of the eight neurons 
\[
v_{111}, v_{112}, v_{121}, v_{122}, v_{211}, v_{212}, v_{221}, v_{222}.
\]
The inputs in $B$ are enough to cause the root neuron $v_{\lambda}$ to fire at time $3$, based on firing of $v_{11}$, $v_{12}$, $v_{21}$, and $v_{22}$ at time $1$ and firing of $v_1$ and $v_2$ at time $2$.
This is because the threshold setting is $2$.
Although we have only eight inputs, they are placed in such a way as to satisfy this threshold for all the listed higher-level neurons.

On the other hand, we can consider much larger sets $B$ that will not cause the root neuron $v_{\lambda}$ to fire, such as:
\[
v_{111}, v_{112}, v_{113}, v_{121}, v_{122}, v_{123}, v_{131}, v_{132}, v_{133},
v_{211}, v_{212}, v_{213}, v_{221}, v_{231},
v_{311}, v_{312}, v_{313}, v_{321}, v_{331}. \\
\]
This is $19$ leaf neurons, but they are badly placed.
The only higher-level neurons that they cause to fire are 
\[
v_{11}, v_{12}, v_{13}, v_{21}, v_{31}, v_1.
\]


Once again, if we allow continual input firing, we get continual firing of the other neurons, once they get started.
Alternatively, we could add self-loops on the layer $1$ neurons, to achieve the same effect. 
This time, the self-loops would need to have weights of at least $r k$, in order to ensure that a layer $1$ neuron continues firing if it is triggered by its children to fire at time $1$.
\end{example}

Of course, many other examples are possible, since our abstract network model is a very general weighted digraph model.


\subsection{Detailed network model}
\label{sec: abstract-firing-model}

The detailed network model is the same as the abstract network model, with the only exception being that the detailed model admits some neuron and edge failures.  

Thus, the network digraph is arbitrary, except that input neurons have no self-loops.  Each neuron $v$ has a $firing$ state component, $firing^v$, and each non-input neuron $v$ has an associated $threshold$, $threshold^v$.  Each edge $e$ has an associated $weight$, $weight^e$.  Inputs are provided by an external source, by setting the $firing$ components of the input neurons.

We consider neuron failures in which neurons just stop firing and edge failure in which edges just stop transmitting signals.  For now, we consider only initial, permanent failures, which occur at the start of execution.  However, we think that the results should be extendable to other cases, such as the case of transient failures, where some neurons and edges are in a failed state at each time.

We let $F_V$ denote the set of failed vertices and $F_E$ the set of failed edges.
We let $S_V = V - F_V$ denote the set of non-failed, or \emph{surviving} vertices, and let $S_E = E - F_E$ denote the set of non-failed or \emph{surviving} edges.

Execution proceeds as for the abstract networks, with the exception that the nodes in $F_V$ never fire, i.e., $firing^v(t) = 0$ for every $t \geq 0$, and the edges in $F_E$ never transmit potential.
That is, for each non-input neuron $v$, we now compute the \emph{incoming potential} as: 
\[
pot^v(t+1) = \Sigma_{u \in innbrs(v): \ u \in S_V \ \wedge \ (u,v) \in S_E}  \ firing^u(t) \ weight(u,v),
\]
or equivalently,
\[
pot^v(t+1) = \Sigma_{u \in innbrs(v): \ (u,v) \in S_E}  \ firing^u(t) \ weight(u,v).
\]
These are equivalent because the neurons in $F_V$ do not affect the potential, since their $firing$ flags are always $0$.

For our definitions in Section~\ref{sec: correspondence} and results in Section~\ref{sec: mappings}, we will assume certain constraints on the numbers of failures.
These constraints should follow, with high probability, from assumptions about random failures.  However, we will avoid reasoning explicitly about probabilities in this paper.

\section{Corresponding Networks}
\label{sec: correspondence}

In this section, we describe corresponding abstract and detailed networks.  
Our starting point is an arbitrary abstract network, which we call $\mathcal{A}_1$.
From this, we derive a detailed network $\mathcal{D}$ in a systematic way.
$\mathcal{D}$ contains multiple copies of each neuron of $\mathcal{A}_1$.
This redundancy is intended to compensate for a limited number of neuron failures.
$\mathcal{D}$ also contains multiple copies of each edge of $\mathcal{A}_1$.
Namely, if $(u,v)$ is a directed edge in $\mathcal{A}_1$, then $\mathcal{D}$ contains an edge from each copy of $u$ to each copy of $v$. 
This redundancy is intended to compensate for a limited number of edge failures.
Thresholds for copies of neurons in $\mathcal{D}$ are somewhat reduced from thresholds of the original neurons in $\mathcal{A}_1$.

From abstract network $\mathcal{A}_1$, we also derive a second abstract network $\mathcal{A}_2$, by simply reducing the thresholds of $\mathcal{A}_1$, to the same level as the thresholds in $\mathcal{D}$.
This follows the approach of the earlier paper~\cite{lynch2024abstraction}.
There, instead of a single abstract algorithm $\mathcal{A}$ as one might expect, we gave two abstract algorithms, $\mathcal{A}_1$ and $\mathcal{A}_2$, one for proving firing guarantees and the other for proving non-firing guarantees.
Such a division seems necessary in view of the uncertainty in the detailed network $\mathcal{D}$.

For the rest of this section, fix the arbitrary abstract network $\mathcal{A}_1$.

In the rest of the paper, we generally use $u, v, w$ to denote neurons in abstract networks $\mathcal{A}_1$ and $\mathcal{A}_2$, and $x, y, z$ to denote neurons in detailed network $\mathcal{D}$.

\subsection{Definition of $\mathcal{D}$}
\label{sec: def-D}

Here we describe how to construct the detailed network $\mathcal{D}$ directly from the given abstract network $\mathcal{A}_1$.
$\mathcal{D}$ uses a positive integer parameter $m$.
For each neuron $v$ of $\mathcal{A}_1$, $\mathcal D$ contains exactly $m$ corresponding neurons, which we call $copies$ of $v$; we denote the set of copies of $v$ by $copies(v)$.


As for the edges, if $(u,v)$ is an edge in $\mathcal{A}_1$, $x \in copies(u)$ and $y \in copies(v)$, then we include an edge $(x,y)$ in $\mathcal{D}$.
These are the only edges in $\mathcal{D}$.
We define the weight of each such edge $(x,y)$, $weight(x,y)$, to be 
\[
weight(x,y) = \frac{ weight(u,v)}{m}.
\]
That is, we divide the weights of edges of $\mathcal{A}_1$ by $m$.\footnote{The approach here differs slightly from that in~\cite{lynch2024abstraction}.  There, we kept the edge weights unchanged but increased the threshold by a factor of $m$.  The new approach seems a bit simpler and more natural.
}

We must also define the thresholds for the non-input neurons of $\mathcal{D}$.
Suppose that the threshold for a particular neuron $v$ in $\mathcal{A}_1$ is $h$, i.e.,
$threshold^v = h$, and consider any $y \in copies(v)$.

The threshold for $v$, $threshold^v$, is supposed to accommodate the potential from all the incoming neighbors of $v$.
The threshold for $y$, $threshold^y$, is supposed to accommodate the potential from all copies of all the incoming neighbors of $v$.
For each incoming neighbor $u$ of $v$, there are $m$ copies, but on the other hand, the weights of edges from copies of $u$ to $y$ are divided by $m$, compared to the corresponding weight of edge $(u,v)$ in $\mathcal{A}_1$.
So that might suggest that $threshold^y$ should be the same as $threshold^v$, which is $h$.

However, we also must take into account possible failures of neurons and edges.
For this purpose, we introduce new factors of $s_V$, $0 < s_V \leq 1$, for survival of neurons, and $s_E$, $0 < s_E \leq 1$, for survival of edges.\footnote{In~\cite{lynch2024abstraction}, we used other notation:  $1 - \epsilon$ for $s_V$ and $a$ for $s_E$.  The notation in this paper seems more natural, since it emphasizes the parallel between the failures of neurons and edges.}
In terms of this notation, we define $threshold^y = s_V s_E h$.

In an execution of $\mathcal D$, some of the neurons and edges may fail, but we impose two constraints on the failure sets $F_V$ and $F_E$ (and their complementary survival sets $S_V$ and $S_E$).
As noted above, we consider here just initial failures, of both neurons and edges.
The constraints are:
\begin{enumerate}
\item 
For each neuron $v$ in $\mathcal{A}_1$, at least $s_V m$ of the neurons in $copies(v)$ survive, i.e., do not fail.  That is, there are at least $s_V m$ copies of $v$ in the survival set $S_V$.
\item
For each neuron $v$ in $\mathcal{A}_1$, each edge $(u,v)$ in $\mathcal{A}_1$, and
each $y \in copies(v)$, there are at least $s_V s_E m$ surviving edges to $y$ from copies of $u$ that survive.
That is, there are at least $s_V s_E m$ edges in $S_E$ from copies of $u$ in $S_V$.
\end{enumerate}

Now we define corresponding executions of $\mathcal{A}_1$ and $\mathcal{D}$.
For this, we first fix the failures and survival sets $F_V$, $F_E$, $S_V$, and $S_E$ for $\mathcal D$. 
Then all we need to do to define corresponding executions of $\mathcal{A}_1$ and $\mathcal{D}$ is to specify the inputs to the two networks; the executions are then completely determined.

So consider any arbitrary pattern of inputs to $\mathcal{A}_1$; this means a specification of which input neurons fire in $\mathcal{A}_1$ at each time $t$.
Then for every input neuron $v$ of $\mathcal{A}_1$ and every time $t$:
\begin{enumerate}
\item 
If $v$ fires at time $t$ in $\mathcal{A}_1$ then all of the neurons in $copies(v)$ that survive fire in $\mathcal{D}$.  (By assumption on the network model, none of the failed neurons fire.)
\item
If $v$ does not fire at time $t$ in $\mathcal{A}_1$ then none of the neurons in $copies(v)$ fire in $\mathcal{D}$.   
\end{enumerate}


\begin{example}
\emph{Line:}
\label{ex: line2}
We consider the detailed network corresponding to the abstract network of Example~\ref{ex: line1}.  
Figure~\ref{fig: ex-figure: line2} depicts a special case of that network, with $\lmax = 5$, as in Figure~\ref{fig: ex-figure: line1}, and with $m = 4$, $s_V = 3/4$, and $s_E = 2/3$. 
Thresholds for non-input neurons are $s_V s_E h = (3/4)(2/3) 1 = 1/2$.  
Edge weights are $1/4$.

\begin{figure}[t]
\vspace{-1cm}
\includegraphics[width = 6.5in]{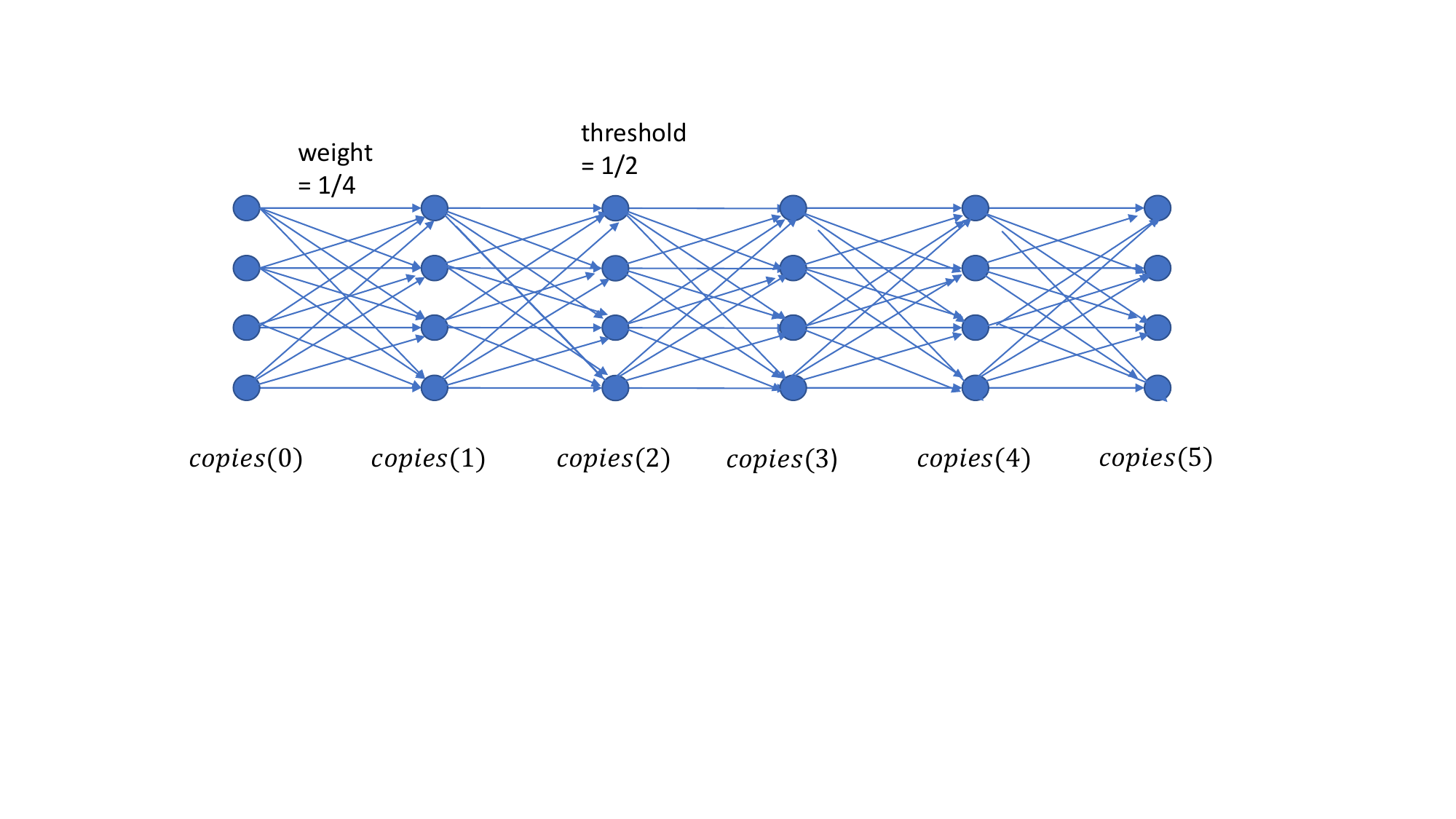}
\caption{Detailed line network, for $\lmax = 5$, $m=4$, thresholds are $1/2$, edge weights are $1/4$.}
\label{fig: ex-figure: line2}
\end{figure}

Assuming that the network satisfies the two constraints in Section~\ref{sec: def-D}, one possible failure pattern for this special-case network might be as follows:
For every neuron $v$ of $\mathcal{A}_1$, the highest-numbered neuron in $copies(v)$ fails (according to some arbitrary numbering).
Also, for every edge $(u,v)$ of $\mathcal{A}_1$ and every $y \in copies(v)$, the edge from the lowest-numbered neuron in $copies(u)$ to $y$ fails.

Assume that the input arrives at time $0$ only.
Then the first three neurons in $copies(0)$ fire at time $0$.
Thereafter, for each time $v \geq 1$, each of the first three neurons $y \in copies(v)$ has its threshold of $1/2$ met for time $v$ by the firing of surviving neurons in $copies(v-1)$ whose edges to $y$ also survive.
So the first three neurons $y \in copies(v)$ fire at time $v$.

In this way, the behavior of the detailed network can be viewed as imitating the behavior of the abstract network.
We will make the correspondence between behaviors more precise in Section~\ref{sec: mappings}.
\end{example}

\begin{example}
\label{ex: ring2}
\emph{Ring:}
We consider the detailed network corresponding to the abstract network of Example~\ref{ex: ring1}.  As in the previous example, we consider the special case where $\lmax = 5$, $m=4$, $s_V = 3/4$, and $s_E = 2/3$. 
Thresholds in $\mathcal{D}$ are again $1/2$ and edge weights are $1/4$.

We consider the same failure pattern as in the previous example:  
For every neuron $v$ of $\mathcal{A}_1$, the highest-numbered neuron in $copies(v)$ fails.
Also, for every edge $(u,v)$ of $\mathcal{A}_1$ and every $y \in copies(v)$, the edge from the lowest-numbered neuron in $copies(u)$ to $y$ fails.

Assume that the input arrives at time $0$ only.  
Then the first three neurons in $copies(0)$ fire at time $0$.  
Thereafter, each of the first three neurons in $copies(v)$, $v \geq 1$, fire at every time $t$ such that $t = v \bmod 5$.
Again, the behavior of the detailed network can be viewed as imitating the behavior of the abstract network.
\end{example}

\begin{example}
\emph{Hierarchy:}
\label{ex: hierarchy2}
Finally, we consider the detailed network corresponding to the abstract network of Example~\ref{ex: hierarchy1}.
Now we consider the special case where $\lmax = 3$, $k=3$, $r = 2/3$, $m=4$, $s_V = 3/4$, and $s_E = 2/3$. 
So the thresholds in $\mathcal{D}$ are now $s_V s_E rk = (3/4)(2/3)(2/3)3 = 1$, reduced from $2$ in $\mathcal{A}_1$.
Edge weights are $1/4$.

First, we consider the same failure pattern as in the previous two examples:
For every neuron $v$ of $\mathcal{A}_1$, the highest-numbered neuron in $copies(v)$ fails.
Also, for every edge $(u,v)$ of $\mathcal{A}_1$ and every $y \in copies(v)$, the edge from the lowest-numbered neuron in $copies(u)$ to $y$ fails.

Assume that the input arrives at time $0$ only.
Suppose the input set $B$ is the first input set described in Example~\ref{ex: hierarchy1}, namely, 
\[
v_{111}, v_{112}, v_{121}, v_{122}, v_{211}, v_{212}, v_{221}, v_{222}.
\]
Then the first three neurons in $copies(v)$ for each $v \in B$ fire at time $0$.
Thereafter, for every neuron $u$ that fires at time $t$ in the execution of the abstract network on input $B$, each of the first three neurons in $copies(v)$ fires at time $t$ in the detailed network.
This can be seen by arguing recursively on the levels of the network, using the particular failure pattern and the new thresholds of $1$.
Once again, the behavior of the detailed network can be viewed as imitating the behavior of the abstract network.

On the other hand, in this example, unlike the previous two examples, the low thresholds can introduce some new behaviors that do not imitate the behavior of the abstract network.
For example, consider a situation in which no failures occur, and consider the set $B$ that consists only of the single neuron $v_{111}$.
Assume the input arrives at time $0$, as usual.
Then all four of the copies of $v_{111}$ fire, contributing incoming potential of $4 (1/4) = 1$ to each copy of $v_{11}$.  This is enough to meet the thresholds of $1$ for these copies.  So all four copies of $v_{11}$ fire at time $1$.
In the same way, we see that all four copies of $v_{1}$ fire at time $2$, and all four copies of the root neuron $v_{\lambda}$ fire at time $3$.
Thus, the firing of only a single input neuron in the abstract network is enough to cause firing of all four copies of the root neuron in the detailed network.
This does not imitate any behavior of the abstract network.

Identifying the conditions under which the detailed network does behave like the abstract network is the subject of Section~\ref{sec: mappings}.
\end{example}

\subsection{Definition of $\mathcal{A}_2$}
\label{sec: def-A2}

Now we define the second abstract network $\mathcal{A}_2$.
The two networks $\mathcal{A}_1$ and $\mathcal{A}_2$ have the same sets of neurons and edges, and the same edge weights.
The only difference is in the thresholds of the non-input neurons, which are reduced in $\mathcal{A}_2$ in order to accommodate failed neurons and edges.

Namely, assume we are given constants $s_V$, $0 < s_V \leq 1$, and $s_E$, $0 < s_E \leq 1$, as in the definition of $\mathcal{D}$.
In fact, we use the same values of $s_V$ and $s_E$ in constructing $\mathcal{A}_2$ as we did for $\mathcal{D}$.
Then if $threshold^v = h$ in $\mathcal{A}_1$, then $threshold^v$ in $\mathcal{A}_2$ is reduced by multiplying $h$ by $s_V s_E$.  That is, $threshold^v = s_V s_E h$ in $\mathcal{A}_2$.
Note that this is the same as the thresholds of the copies of $v$ in $\mathcal{D}$.

Now we define corresponding executions of $\mathcal{A}_2$ and $\mathcal{D}$.  The method is the same as for $\mathcal{A}_1$ and $\mathcal{D}$.
Namely, we first fix the failure and survival sets $F_V$, $F_E$, $S_V$, and $S_E$ for $\mathcal{D}$,
Then we specify the inputs to the two networks; the executions are then determined.

So consider any arbitrary pattern of inputs to $\mathcal{A}_2$.
Then for every input neuron $v$ of $\mathcal{A}_2$ and every time $t$:
\begin{enumerate}
\item 
If $v$ fires at time $t$ in $\mathcal{A}_2$ then all of the neurons in $copies(v)$ that survive fire in $\mathcal{D}$.  (By assumption on the network model, none of the failed neurons fire.)
\item
If $v$ does not fire at time $t$ in $\mathcal{A}_2$ then none of the neurons in $copies(v)$ fire in $\mathcal{D}$.   
\end{enumerate}

What is the difference between the executions of $\mathcal{A}_1$ and $\mathcal{A}_2$ on the same input patterns?
The only difference between the structure of these two networks is the somewhat lower thresholds in $\mathcal{A}_2$. 
Theoretically, this could mean that certain neurons might fire at certain times in the execution of $\mathcal{A}_2$ but not in the execution of $\mathcal{A}_1$.
But depending on the values of the various parameters, this difference might or might not actually arise. 

For instance, in the three Examples~\ref{ex: line1}, \ref{ex: ring1}, and~\ref{ex: hierarchy1}, the firing behavior would be exactly the same for $\mathcal{A}_1$ and $\mathcal{A}_2$, on the same input pattern, provided that the value of $s_V s_E$ is large enough.
In fact, $s_V s_E > 0$ is good enough in Examples~\ref{ex: line1} and~\ref{ex: ring1}, and $s_V s_E > 1/2$ is good enough in Example~\ref{ex: hierarchy1}.
The reason is that the differences in the thresholds here are small with respect to the differences in potential caused by the firing of different numbers of incoming neighbors.

On the other hand, if the value of $s_V s_E$ is small, then new firing behaviors can be introduced in $\mathcal{A}_2$ that do not occur in $\mathcal{A}_1$.
For instance, in the hierarchy example, if $s_V s_E = 1/2$, then the thresholds in $\mathcal{A}_2$ are reduced to $1$, and firing of the single input neuron $v_{111}$ is enough to trigger firing of higher-level neurons $v_{11}$, $v_1$, and $v_{\lambda}$.
This firing behavior does not happen in $\mathcal{A}_1$.

\section{Mapping Theorems}
\label{sec: mappings}

This section contains our main results.
We assume two abstract networks $\mathcal{A}_1$ and $\mathcal{A}_2$, and detailed network $\mathcal{D}$, satisfying the correspondence definitions in Section~\ref{sec: correspondence}. 
We assume that $\mathcal{D}$ and $\mathcal{A}_2$ use the same values of $s_V$ and $s_E$.
We give three \emph{mapping theorems}, showing precise, step-by-step relationships between the behavior of $\mathcal{D}$ and the behavior of the two abstract networks.

The first theorem, Theorem~\ref{thm: firing}, says that, if a neuron $v$ of $\mathcal{A}_1$ fires at a certain time, then at least $s_V m$ of the copies of $v$ in $\mathcal{D}$ also fire at that time.
The second theorem, Theorem~\ref{thm: non-firing}, says that, if a neuron $v$ of $\mathcal{A}_2$ does not fire at a certain time, then none of the copies of $v$ in $\mathcal{D}$ fire at that time.
Together, these two results say that, if the networks correspond as described in Section~\ref{sec: correspondence}, then the firing behavior of $\mathcal{D}$ is guaranteed to faithfully follow the firing behavior of the abstract networks.

Thus, if the input to the abstract networks is sufficient to ensure firing of a neuron $v$ in $\mathcal{A}_1$, where the thresholds are high, then most of the copies 
of $v$ will fire in $\mathcal{D}$, whereas if the input is not sufficient to ensure firing of $v$ in $\mathcal{A}_2$, where the thresholds are low, then none of the copies of $v$ will fire in $\mathcal{D}$.
There is a middle ground, where the input to the abstract networks is sufficient to ensure firing of $v$ when the thresholds are low, but not sufficient to ensure firing when the thresholds are high.
In this case, we do not make any claims about the firing behavior of the copies of $v$ in $\mathcal{D}$.
Indeed, it would be hard to make any such claims, because of the uncertainty in the behavior of $\mathcal{D}$ resulting from the possible failures of neurons and edges.
The discussion in Example~\ref{ex: hierarchy2} should give some idea of the issues.

These results and their proofs generally follow the approach in~\cite{lynch2024abstraction}. 
In particular, in~\cite{lynch2024abstraction}, we also show mappings between a detailed network $\mathcal{D}$ and two abstract networks $\mathcal{A}_1$ and $\mathcal{A}_2$.

We also include a third mapping theorem, Theorem~\ref{thm: actuator}.  This theorem compares the external effects of the three corresponding networks when they are used to trigger a reliable external "actuator" neuron.

\subsection{Mapping between $\mathcal{A}_1$ and $\mathcal{D}$}

We consider corresponding executions of $\mathcal{A}_1$ and $\mathcal{D}$, beginning with corresponding inputs, as defined in Section~\ref{sec: correspondence}.
Namely, we fix sets $F_V$ and $F_E$ of failures, and their complements $S_V$ and $S_E$ of survivals.
We fix inputs of $\mathcal{A}_1$ and derive the corresponding inputs of $\mathcal{D}$ as in Section~\ref{sec: correspondence}.
This uniquely determines the executions of $\mathcal{A}_1$ and $\mathcal{D}$.
Theorem~\ref{thm: firing} refers to these executions.

\begin{theorem}
\label{thm: firing}
Consider any neuron $v$ in abstract network $\mathcal{A}_1$.
At any time during the executions of $\mathcal{A}_1$ and $\mathcal{D}$,
if neuron $v$ fires, then at least $s_V m$ of the neurons in $copies(v)$ fire.
\end{theorem}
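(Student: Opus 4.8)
The plan is to argue by induction on the time $t$, proving a slightly stronger statement that is better suited to induction: \emph{at every time $t$, for every neuron $v$ of $\mathcal{A}_1$, if $v$ fires at time $t$ in $\mathcal{A}_1$ then every surviving copy $y \in copies(v)\cap S_V$ fires at time $t$ in $\mathcal{D}$}. Together with the first failure constraint (at least $s_V m$ copies of $v$ survive), this immediately gives the theorem. The strengthening is what makes the induction close: the potential reaching a copy $y$ of $v$ depends on \emph{which} copies of the incoming neighbors of $v$ fire, and I need all the surviving ones among them so that the second failure constraint, which counts surviving edges from surviving copies, can be invoked.

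For the base case $t=0$, the claim for input neurons is exactly part~(1) of the input correspondence of Section~\ref{sec: correspondence}; for non-input neurons it concerns only the initial configuration, and is trivial under the convention (used in all three running examples) that non-input neurons are initially non-firing. For the inductive step, assume the strengthened statement at time $t$ and let $v$ fire at time $t+1$ in $\mathcal{A}_1$. If $v$ is an input neuron, apply the input correspondence again. Otherwise let $threshold^v = h$, so $pot^v(t+1)=\sum_{u\in innbrs(v)} firing^u(t)\,weight(u,v)\ge h$, and fix an arbitrary surviving $y\in copies(v)$; the goal is $pot^y(t+1)\ge threshold^y = s_V s_E h$. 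Using $weight(x,y)=weight(u,v)/m$ for $x\in copies(u)$, we have $pot^y(t+1)=\sum_{u\in innbrs(v)}\frac{weight(u,v)}{m}\,c_u$, where $c_u$ counts the copies of $u$ that fire at time $t$ in $\mathcal{D}$ and whose edge to $y$ survives. For each incoming neighbor $u$ with $weight(u,v)>0$ that fires at time $t$ in $\mathcal{A}_1$: the induction hypothesis makes every surviving copy of $u$ fire at time $t$ in $\mathcal{D}$, and the second failure constraint gives at least $s_V s_E m$ edges into $y$ from surviving copies of $u$, so $c_u \ge s_V s_E m$ and $u$'s contribution to $pot^y(t+1)$ is at least $s_V s_E\,weight(u,v)$. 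Since in $\mathcal{A}_1$ the inhibitory neighbors only lower the potential, the excitatory neighbors that fire there already carry potential at least $h$, so the excitatory part of $pot^y(t+1)$ is at least $s_V s_E h$.

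The delicate point --- and the step I expect to be the main obstacle --- is the inhibitory incoming neighbors. A neighbor $u$ with $weight(u,v)<0$ can have copies that fire in $\mathcal{D}$ even though $u$ does not fire in $\mathcal{A}_1$ (Example~\ref{ex: hierarchy2} exhibits exactly this sort of extra firing of copies), and this adds a negative term to $pot^y(t+1)$ with no counterpart in $\mathcal{A}_1$. To finish in full generality one therefore needs some control over the copies of inhibitory neighbors --- for instance the non-firing guarantee that Theorem~\ref{thm: non-firing} supplies via $\mathcal{A}_2$ --- or one restricts attention to non-negative edge weights, which already covers all three running examples. In the non-negative-weight case nothing more is needed: every term of $pot^y(t+1)$ is non-negative, the excitatory estimate gives $pot^y(t+1)\ge s_V s_E h = threshold^y$, hence $y$ fires at time $t+1$ in $\mathcal{D}$, and the induction is complete.
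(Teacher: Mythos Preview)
Your approach is essentially identical to the paper's: induction on $t$ with the strengthened hypothesis that \emph{every surviving} copy of a firing neuron fires, then invoking the second failure constraint to get at least $s_V s_E m$ contributing copies of each firing incoming neighbor, yielding incoming potential at least $s_V s_E h = threshold^y$.

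Your caution about inhibitory edges is well placed and in fact goes beyond the paper. The paper's inductive step writes ``contributes at least $s_V s_E\, weight(u,v)$'' without distinguishing signs, which is only a valid lower bound when $weight(u,v)\ge 0$; it also never addresses copies of non-firing inhibitory neighbors that might fire in $\mathcal{D}$. So the paper's argument, like yours, is complete only under the non-negative-weight assumption (which covers all the running examples); your explicit identification of this restriction, and of the possible remedy via Theorem~\ref{thm: non-firing}, is a genuine improvement in rigor over the published proof.
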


\begin{proof}
We proceed by induction on the times $t$ during the execution, to prove the following slightly stronger statement $P(t)$, for every time $t$:  \\
$P(t)$:  For every neuron $v$ of $\mathcal{A}_2$, if $v$ fires at time $t$ in the execution of $\mathcal A_1$, then every surviving neuron in $copies(v)$ fires at time $t$ in the execution of $\mathcal D$. \\
This statement implies the theorem statement because, by assumption, for any $v$, at least $s_V m$ of the neurons in $copies(v)$ survive.

\emph{Base:}  Time $0$. \\
Suppose that some neuron $v$ fires at time $0$ in the execution of $\mathcal A_1$.
Then $v$ is an input neuron that is selected for firing.  By the input conventions for the neurons in $copies(v)$, all of the surviving copies fire at time $0$, as needed.

\emph{Inductive step:}  Time $t+1$. \\
Suppose that some neuron $v$ fires at time $t+1$.  Then some collection of incoming neighbor neurons fire at time $t$, sufficient to meet $v$'s threshold, say $h$.
Specifically, the sum of the weights of the edges incoming to $v$ from the set of incoming neighbors of $v$ that fire at time $t$ is at least $h$.
That is, 
\[\sum_{u} weight(u,v) \geq h,
\]
where $u$ ranges over the set of incoming neighbors to $v$ in $\mathcal{A}_1$.

For each of the firing incoming neighbors $u$ of $v$, by inductive hypothesis $P(t)$, every surviving neuron in $copies(u)$ fires at time $t$ in the execution of $\mathcal D$.

Now consider any particular surviving neuron $y \in copies(v)$; we show that $y$ fires at time $t+1$ in the execution of $\mathcal D$, as needed.
Consider any particular incoming neighbor $u$ of $v$ in $\mathcal{A}_1$.
By an assumption on $\mathcal D$, we know that there are at least $s_V s_E m$ neurons in $copies(u)$ that both survive and also have surviving edges to $y$.
Since these neurons all survive, by the inductive hypothesis $P(t)$, they all fire at time $t$.

Thus, there are at least $s_V s_E m$ firing neurons in $copies(u)$ that have surviving edges to $y$.
By definition of the weights in $\mathcal{D}$, each such neuron $x$ is connected to $y$ with an edge with weight $\frac{weight(u,v)}{m}$.
So this entire collection of firing neurons in $copies(u)$ together contributes at least 
\[
s_V s_E m \frac{weight(u,v)}{m} = s_V s_E weight(u,v)
\]
potential to $y$.

Considering all the incoming neighbors $u$ of $v$ together, we get a total incoming potential to $y$ of at least 
\[\sum_u s_V s_E \ weight(u,v) = s_V s_E \sum_u weight(u,v) \geq s_V s_E h,
\]
which is enough to meet the firing threshold for $y$ in $\mathcal{D}$.
Therefore, since $y$ survives, it fires at time $t+1$, as needed.
\end{proof}

\begin{example}
\label{ex: line3}
\emph{Line:}
We revisit the line example from Examples~\ref{ex: line1} and~\ref{ex: line2}.
Let $\mathcal{A}_1$ be the abstract network from Example~\ref{ex: line1}, with $\lmax = 5$, and let $\mathcal{D}$ be the corresponding detailed network from Example~\ref{ex: line2}, with $m = 4$, $s_V = 3/4$, and $s_E = 2/3$.
Suppose that, in $\mathcal{A}_1$, the input neuron fires just at time $0$, and the inputs in $\mathcal{D}$ correspond.
We know that, in $\mathcal{A}_1$, each neuron $v$ fires at exactly time $v$.
Then Theorem~\ref{thm: firing} implies that, at any time $v \geq 0$, at least $s_V 4 = (3/4) 4 = 3$ of the neurons in $copies(v)$ fire.

Now let $\mathcal{A}_1$ be the line example from Example~\ref{ex: line1}, with an arbitrary value of $\lmax$.
Now suppose that the input neuron $0$ fires at all even-numbered times $0, 2, 4, \ldots$.
Then it is easy to see that, in $\mathcal{A}_1$, each neuron $v$ fires at exactly times $v, v+2, v+4, \ldots$.
For example, at time $7$, exactly neurons $1$, $3$, $5$, and $7$ fire.
Let $\mathcal{D}$ be the corresponding detailed network, and suppose that $m = 4$, $s_V = 3/4$, and $s_E - 2/3$.
Then Theorem~\ref{thm: firing} implies that, for any neuron $v$, at least $s_V 4 = (3/4) 4 = 3$ copies of $v$ fire at each time $v, v+2, v+4, \ldots$.
\end{example}

\begin{example}
\label{ex: hierarchy3}
\emph{Hierarchy:}
We revisit the hierarchy example from Examples~\ref{ex: hierarchy1} and~\ref{ex: hierarchy2}.
Let $\mathcal{A}_1$ be the abstract network from Example~\ref{ex: hierarchy1}, with $\lmax = 3$, $k = 3$, and $r = 2/3$, and let $\mathcal{D}$ be the corresponding detailed network from Example~\ref{ex: hierarchy2}, with $m = 4$, $s_V = 3/4$, and $s_E = 2/3$.
Let the input set $B$ for $\mathcal{A}$ be
\[
v_{111}, v_{112}, v_{121}, v_{122}, v_{211}, v_{212}, v_{221}, v_{222},
\]
and suppose that the neurons in $B$ fire at time $0$ in $\mathcal{A}_1$, and the inputs in $\mathcal{D}$ correspond.
We know that, in $\mathcal{A}_1$, neuron $v_{\lambda}$ fires at time $3$.
Then Theorem~\ref{thm: firing} implies that, in $\mathcal{D}$, at least $s_V m = (3/4) 4 = 3$ copies of $v_{\lambda}$ fire at time $3$.

Now let $\mathcal{A}_1$ be the abstract network from Example~\ref{ex: hierarchy1}, this time with $\lmax = 3$, $k = 5$, and $r = 4/5$.
Let $\mathcal{D}$ be the corresponding detailed network from Example~\ref{ex: hierarchy2}, this time with more copies of each neuron of $\mathcal{A}_1$ and larger values of the survival parameters $s_V$ and $s_E$.
Specifically, let $m = 32$, $s_V = 15/16$, and $s_E = 14/15$.
For this case,
Theorem~\ref{thm: firing} implies that, in $\mathcal{D}$, at least $s_V m = (15/16) 32 = 30$ copies of $v_{\lambda}$ fire at time $3$.  
\end{example}

\subsection{Mapping between $\mathcal{A}_2$ and $\mathcal{D}$}

Now we consider corresponding executions of $\mathcal{A}_2$ and $\mathcal{D}$, beginning with corresponding inputs.  
Again, we fix sets $F_V$, $F_E$, $S_V$, and $S_E$, and the inputs of $\mathcal{A}_2$ and corresponding inputs of $\mathcal{D}$. 
This yields unique executions of $\mathcal{A}_2$ and $\mathcal{D}$.
Theorem~\ref{thm: non-firing} refers to these executions.

\begin{theorem}
\label{thm: non-firing}
Consider any neuron $v$ in abstract network $\mathcal{A}_2$.
At any time during the executions of $\mathcal{A}_2$ and $\mathcal{D}$, if neuron $v$ does not fire, then none of the neurons in $copies(v)$ fire.
\end{theorem}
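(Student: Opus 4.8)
The plan is to prove the contrapositive of the per-time claim by induction on $t$, mirroring the proof of Theorem~\ref{thm: firing}. The inductive hypothesis is $Q(t)$: for every neuron $v$ of $\mathcal{A}_2$ and every $y \in copies(v)$, if $y$ fires at time $t$ in the execution of $\mathcal{D}$, then $v$ fires at time $t$ in the execution of $\mathcal{A}_2$. Note that, in contrast to Theorem~\ref{thm: firing}, no strengthening of the theorem statement is required: its contrapositive is already in inductive form, since the inductive step only uses information about the copies of the incoming neighbors of $v$.

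Base case, $t = 0$: suppose $y \in copies(v)$ fires at time $0$. If $v$ is an input neuron, then item~2 of the input correspondence in Section~\ref{sec: def-A2}, in contrapositive form, gives directly that $v$ fires at time $0$ in $\mathcal{A}_2$. If $v$ is a non-input neuron, the claim follows from the natural initialization convention for $\mathcal{D}$, under which a surviving copy of a non-input neuron $v$ has the same initial $firing$ value as $v$ (which is the same in $\mathcal{A}_1$ and $\mathcal{A}_2$).

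Inductive step, time $t+1$: suppose $y \in copies(v)$ fires at time $t+1$ in $\mathcal{D}$. The case where $v$ is an input neuron is handled exactly as in the base case, so assume $v$ is non-input; let $h = threshold^v$ in $\mathcal{A}_1$, so $threshold^y = s_V s_E h$ in $\mathcal{D}$ and $threshold^v = s_V s_E h$ in $\mathcal{A}_2$. From $y$ firing we get $pot^y(t+1) \ge s_V s_E h$. Every neuron $x$ contributing to this potential is a surviving copy of some incoming neighbor $u$ of $v$, joined to $y$ by a surviving edge of weight $weight(u,v)/m$, with $firing^x(t) = 1$; by $Q(t)$, every such $u$ fires at time $t$ in $\mathcal{A}_2$. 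Letting $c_{u,y} \in [0,1]$ denote the fraction of the $m$ copies of $u$ that fire at time $t$ and have a surviving edge to $y$, we obtain $pot^y(t+1) = \sum_u c_{u,y}\, weight(u,v)$, the sum over incoming neighbors $u$ of $v$ that fire at time $t$ in $\mathcal{A}_2$. Comparing with $pot^v(t+1) = \sum_u weight(u,v)$ over the same set of $u$ in the execution of $\mathcal{A}_2$, and using $c_{u,y} \le 1$ together with $weight(u,v) \ge 0$, we get $pot^v(t+1) \ge pot^y(t+1) \ge s_V s_E h = threshold^v$ in $\mathcal{A}_2$; hence $v$ fires at time $t+1$ in $\mathcal{A}_2$, closing the induction.

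The step I expect to be the crux is this final comparison $pot^y(t+1) \le pot^v(t+1)$. It is immediate when all edge weights are nonnegative, but if inhibitory (negative-weight) edges are allowed it can fail: fewer than all copies of an inhibitory incoming neighbor of $v$ might be firing, so $y$ could accumulate strictly more potential than $v$ does in $\mathcal{A}_2$. Covering that case would require supplementing the induction with a firing-type lower bound on $c_{u,y}$ for firing neighbors $u$ (analogous to the second constraint on $\mathcal{D}$ used in Theorem~\ref{thm: firing}); otherwise the statement should be read in the setting of nonnegative edge weights. Everything else is a routine transcription of the argument for Theorem~\ref{thm: firing}.
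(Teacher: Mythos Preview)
Your proof is correct and takes essentially the same approach as the paper's: induction on $t$, using the hypothesis to zero out contributions from copies of non-firing neighbors and then bounding the total contribution from copies of firing neighbors by the corresponding potential in $\mathcal{A}_2$; you simply phrase the induction in contrapositive form and package the per-neighbor bound as $c_{u,y}\le 1$. Your caveat about negative weights is well taken---the paper's own argument makes the same implicit nonnegativity assumption when it claims that ``the largest possible potential'' arises when all copies of firing neighbors survive and contribute, so your remark in fact identifies a gap shared by the paper.
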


\begin{proof}
We proceed by induction on the times $t$ during the execution, to prove the following statement $Q(t)$, for every time $t$: \\
$Q(t)$:  For every neuron $v$ of $\mathcal{A}_2$, if $v$ does not fire at time $t$ in the execution of $\mathcal{A}_2$, then no neurons in $copies(v)$ fire at time $t$.

\emph{Base:}  Time $0$. \\
Suppose that some neuron $v$ does not fire at time $0$ in the execution of $\mathcal A_2$.  Then by the conventions for the neurons in $copies(v)$, none of these copies fire at time $0$.
This is as needed.

\emph{Inductive step:}  Time $t+1$. \\
Suppose that some neuron $v$ does not fire at time $t+1$ in the execution of $\mathcal{A}_2$.
Then there is insufficient firing of incoming neighbors at time $t$ to meet $v$'s threshold in $\mathcal{A}_2$, which is $s_V s_E h$, where $threshold^v = h$ in $\mathcal{A}_1$.
Specifically, the sum of the weights of the edges incoming to $v$ from the set of incoming neighbors of $v$ that fire at time $t$ is strictly less than $s_V s_E h$.
That is, 
\[
\sum_{u} weight(u,v) < s_V s_E h,
\]
where $u$ ranges over the set of incoming neighbors to $v$ in $\mathcal{A}_2$.

For each incoming neighbor $u$ of $v$ that does not fire at time $t$, by inductive hypothesis $Q(t)$, no neuron in $copies(u)$ fires at time $t$ in the execution of $\mathcal{D}$.

Now consider any particular neuron $y \in copies(v)$; we show that $y$ does not fire at time $t+1$ in the execution of $\mathcal{D}$, as needed.
To show this, we argue that the total potential incoming to $y$ for time $t+1$ is strictly less than its threshold $s_V s_E h$.

We consider two kinds of incoming neighbors that might contribute potential to $y$.
First, consider the neurons in $copies(u)$, where $u$ is a neuron that does not fire at time $t$ in $\mathcal{A}_2$.
By inductive hypothesis $Q(t)$, we know that none of these neurons fire in $\mathcal{A}_2$ at time $t$ and so they do not contribute any potential to $y$ for time $t+1$.

Second, consider the neurons in $copies(u)$, where $u$ is a neuron that does fire at time $t$ in $\mathcal{A}_2$.
The largest possible potential that could arise is for the case where all the copies of all the firing incoming neighbors $u$ of $v$ survive, and also their edges to $y$ survive.
Even in this extremely favorable case, it turns out that the total incoming potential to $v$ for time $t+1$ is the same as the incoming potential to $v$ in $\mathcal{A}_2$, which is $\sum_{u} weight(u,v)$.

To see this, write this extreme potential as a double summation:
\[
\sum_{u} \sum_{x \in copies(u)} weight(x,y),
\]
where $u$ ranges over the set of incoming neighbors to $v$ that fire at time $t$ in $\mathcal{A}_2$.
This is equal to 
\[
\sum_{u} \sum_{x \in copies(u)} (\frac{weight(u,v)}{m}),
\]
because of the way the weights are defined in $\mathcal{D}$.
This is in turn equal to 
\[\sum_{u} m (\frac{weight(u,v)}{m}) =\sum_{u} weight(u,v),
\]
where, as noted already, $u$ ranges over the set of incoming neighbors to $v$ that fire at time $t$ in $\mathcal{A}_2$.

But we already know that 
\[
\sum_{u} weight(u,v) < s_V s_E h.
\]
Since the right-hand side of this inequality is $threshold^y$ in $\mathcal{D}$,
$y$ does not receive enough incoming potential to fire at time $t+1$ in $\mathcal{D}$, as needed.
\end{proof}

\begin{example}
\label{ex: line4}
\emph{Line:} 
Let $\mathcal{A}_1$ be the line example from Example~\ref{ex: line1}, with arbitrary $\lmax$, and obtain $\mathcal{A}_2$ by lowering the thresholds to  $s_V s_E 1$, where $s_V = 3/4$ and $s_E = 2/3$, which is $1/2$.
Consider the corresponding network $\mathcal{D}$ with $m = 4$, $s_V = 3/4$, and $s_E = 2/3$.
Then Theorem~\ref{thm: non-firing} implies that non-firing of any neuron $v$ at any time $t$ in $\mathcal{A}_2$ implies non-firing of any of its copies at time $t$ in $\mathcal{D}$.

Moreover, as noted at the end of Section~\ref{sec: def-A2}, for this case, the firing behavior is exactly the same for $\mathcal{A}_1$ and $\mathcal{A}_2$, on the same input pattern. 
Then we can combine the results from Theorems~\ref{thm: firing} and~\ref{thm: non-firing} to get a strong combined claim, involving just $\mathcal{A}_1$ and $\mathcal{D}$.
Namely, consider any fixed input pattern.
Then for any neuron $v$ in $\mathcal{A}_1$ and any time $t$, 
firing of $v$ at time $t$ in $\mathcal{A}_1$ implies firing of at least $(3/4) m$ of its copies at time $t$ in $\mathcal{D}$.
And non-firing of $v$ at time $t$ in $\mathcal{A}_1$ implies non-firing of any of its copies at time $t$ in $\mathcal{D}$.
This expresses a very close correspondence between the behavior of $\mathcal{A}_1$ and $\mathcal{D}$.
\end{example}

\begin{example}
\label{ex: hierarchy4}
\emph{Hierarchy:}
Let $\mathcal{A}_2$ be the hierarchy example from Example~\ref{ex: hierarchy1}, with arbitrary $\lmax$, $k$ and $r$.
Thresholds are $r k$, and edge weights are $1$.

In Example~\ref{ex: hierarchy2} we saw how small values of $s_V$ and $s_E$ can introduce firing patterns in the detailed network $\mathcal{D}$ that do not correspond to firing patterns in $\mathcal{A}_1$.
For example, with $k = 3$ and $r = 2/3$, the choice of $s_V = 3/4$ and $s_E = 2/3$ can cause copies of $v_{\lambda}$ to fire in $\mathcal{D}$ when only copies of $v_{111}$ are provided as input.
Theorem~\ref{thm: non-firing} still applies in this case, of course, but all it says is that, if a neuron $v$ doesn't fire in $\mathcal{A}_2$, then its copies don't fire in $\mathcal{D}$.
For example, if $v_\lambda$ doesn't fire in $\mathcal{A}_2$ then its copies don't fire in $\mathcal{D}$.
But this is not saying very much, because $v_{\lambda}$ can fire in $\mathcal{A}_2$ in a wide range of situations, because $\mathcal{A}_2$ has the same low thresholds as $\mathcal{D}$.

One way to avoid this phenomenon is to place enough constraints on parameters to ensure that $\mathcal{A}_1$ and $\mathcal{A}_2$ exhibit the same firing behavior on any inputs.
For example, suppose that $k = 5$ and $r = 4/5$, so $r k = 4$.
Suppose that $s_V = 15/16$, and $s_E = 14/15$.
Then the thresholds in $\mathcal{A}_1$ are $rk = 4$, while the thresholds in $\mathcal{A}_2$ are $s_V s_E r k = (15/16)(14/15) 4 = 7/2$.
Since there is no integer $n$ such that $7/2 \leq n < 4$, this means that exactly the same sets of firing children trigger the firing of corresponding neurons in $\mathcal{A}_1$ and $\mathcal{A}_2$.
This implies that the behavior of $\mathcal{A}_1$ and $\mathcal{A}_2$ is the same on all inputs.

Then we consider $\mathcal{D}$, with arbitrary $m$, say $m = 32$.
$\mathcal{D}$ has the same threshold, $7/2$, as $\mathcal{A}_2$.
Theorem~\ref{thm: non-firing} implies that non-firing of any $v$ at any time $t$ in $\mathcal{A}_2$ implies non-firing of any of its copies at time $t$ in $\mathcal{D}$.
As in the previous example, we can combine the firing and non-firing results to get a strong combined claim, involving just $\mathcal{A}_1$ and $\mathcal{D}$.
Namely, consider any fixed input pattern.
Then for any neuron $v$ in $\mathcal{A}_1$ and any time $t$, 
firing of $v$ at time $t$ in $\mathcal{A}_1$ implies firing of at least $(15/16) 32 = 30$ of its copies at time $t$ in $\mathcal{D}$.
And non-firing of $v$ at time $t$ in $\mathcal{A}_1$ implies non-firing of any of its copies at time $t$ in $\mathcal{D}$.
This expresses a very close correspondence between the behavior of $\mathcal{A}_1$ and $\mathcal{D}$.
\end{example}


Notice that, in Example~\ref{ex: line4} and the last part of Example~\ref{ex: hierarchy4}, $\mathcal{A}_1$ and $\mathcal{A}_2$ behave exactly the same, on any choice of inputs.  This is not always the case---for example, in the hierarchy example, if the number $k$ of children is very large, then some inputs could cause some neurons to fire because they meet the lower thresholds of $\mathcal{A}_2$ but not the higher thresholds of $\mathcal{A}_1$.

\subsection{Comparing the networks' external effects}

For a final result, we compare the networks' effects on a common, reliable external "actuator" neuron.
We consider the three corresponding networks $\mathcal{A}_1$, $\mathcal{A}_2$, and $\mathcal{D}$, defined as in Section~\ref{sec: correspondence}.
Fix $F_V$, $F_E$, $S_V$, and $S_E$.

Now, introduce a new neuron $a$, which we think of as an external actuator.
Suppose that $threshold^a = s_V$.
Connect one non-input neuron $v$ of $\mathcal{A}_1$ to the actuator neuron $a$, by a new edge with weight $1$.
Call the resulting network $\mathcal{A}^a_1$.
Likewise, connect the corresponding neuron $v$ of $\mathcal{A}_2$ is to neuron $a$, by a new edge with weight $1$, and call the resulting network $\mathcal{A}^a_2$.
For $\mathcal{D}$, connect all of the copies of $v$ to the same neuron $a$, by new edges of weight $1/m$, and call the resulting network $\mathcal{D}^a$.

Then neuron $a$ is a non-input neuron of each of the three networks $\mathcal{A}^a_1$,
$\mathcal{A}^a_2$, and $\mathcal{D}^a$.
We assume that, in all three networks, $a$ is reliable, i.e., it does not fail.
Also, the new edges from $v$ to $a$ and from copies of $v$ to $a$ do not fail.
%
Also, in all three networks, $firing^a(0) = 0$, i.e., $a$ does not fire at time $0$.

Fix any inputs of $\mathcal{A}_1$, the same inputs for $\mathcal{A}_2$, and corresponding inputs of $\mathcal{D}$.  We obtain:

\begin{theorem}
\label{thm: actuator}
Consider corresponding executions of the three networks $\mathcal{A}^a_1$, $\mathcal{A}^a_2$, and $\mathcal{D}^a$.
Then for any time $t \geq 1$,
\begin{enumerate}
\item 
If neuron $a$ fires at time $t$ in $\mathcal{A}^a_1$, then $a$ fires at time $t$ in $\mathcal{D}^a$.
\item 
If neuron $a$ does not fire at time $t$ in $\mathcal{A}^a_2$, then $a$ does not fire at time $t$ in $\mathcal{D}^a$.
\end{enumerate}
\end{theorem}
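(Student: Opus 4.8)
The plan is to reduce both parts to the two mapping theorems already established, using the fact that the firing of the actuator $a$ at any time $t$ is completely determined by whether the single neuron $v$ to which it is attached fired at time $t-1$. First I would observe that attaching $a$ to $\mathcal{A}_1$, $\mathcal{A}_2$, and $\mathcal{D}$ does not alter the executions of the original networks: $a$ is a sink, with no outgoing edges, so it cannot influence any other neuron. Hence the executions of $\mathcal{A}_1$ and $\mathcal{D}$ sitting inside $\mathcal{A}^a_1$ and $\mathcal{D}^a$ are precisely the corresponding executions to which Theorem~\ref{thm: firing} applies, and likewise the execution of $\mathcal{A}_2$ inside $\mathcal{A}^a_2$ together with the execution of $\mathcal{D}$ inside $\mathcal{D}^a$ are the ones covered by Theorem~\ref{thm: non-firing}.

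For part (1), suppose $a$ fires at time $t \geq 1$ in $\mathcal{A}^a_1$. The only incoming edge of $a$ is $(v,a)$ with weight $1$, and $threshold^a = s_V$ with $0 < s_V \leq 1$, so the incoming potential to $a$ at time $t$ is $firing^v(t-1)\cdot 1$, which meets the threshold $s_V$ exactly when $firing^v(t-1)=1$. Thus $v$ fires at time $t-1$ in $\mathcal{A}_1$. By Theorem~\ref{thm: firing}, at least $s_V m$ of the neurons in $copies(v)$ fire at time $t-1$ in $\mathcal{D}$; in $\mathcal{D}^a$ each such copy contributes potential $1/m$ to $a$ along its (non-failing) edge, so the incoming potential to $a$ at time $t$ is at least $s_V m \cdot (1/m) = s_V = threshold^a$. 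Since $a$ is reliable, $a$ fires at time $t$ in $\mathcal{D}^a$, as needed.

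For part (2), suppose $a$ does not fire at time $t \geq 1$ in $\mathcal{A}^a_2$. By the same computation, its incoming potential at time $t$ equals $firing^v(t-1)$ for the corresponding neuron $v$ in $\mathcal{A}_2$, and this is strictly below $s_V$ only if $firing^v(t-1)=0$; so $v$ does not fire at time $t-1$ in $\mathcal{A}_2$. By Theorem~\ref{thm: non-firing}, no neuron in $copies(v)$ fires at time $t-1$ in $\mathcal{D}$, so the incoming potential to $a$ at time $t$ in $\mathcal{D}^a$ is $0 < s_V = threshold^a$, and $a$ does not fire at time $t$ in $\mathcal{D}^a$.

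I do not expect a genuine obstacle here; the proof is essentially bookkeeping. The only points requiring a little care are the one-step shift (the hypothesis $t \geq 1$ is exactly what makes "$a$ fires at $t$" translate to "$v$ fires at $t-1$"), the observation that the sink neuron $a$ leaves every other neuron's trajectory untouched so the earlier theorems apply verbatim, and the fact that part (1) relies only on the lower bound of $s_V m$ firing copies from Theorem~\ref{thm: firing} while part (2) relies on the all-or-nothing conclusion of Theorem~\ref{thm: non-firing} --- matching the choice $threshold^a = s_V$ to these two facts is precisely why that threshold was selected.
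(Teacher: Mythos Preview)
Your proposal is correct and follows essentially the same approach as the paper's own proof: both parts reduce the firing/non-firing of $a$ at time $t$ to the firing/non-firing of $v$ at time $t-1$, then invoke Theorems~\ref{thm: firing} and~\ref{thm: non-firing} respectively, and finish by computing the incoming potential to $a$ in $\mathcal{D}^a$. Your explicit observation that $a$ is a sink (so adjoining it does not perturb the underlying executions) is a point the paper leaves implicit, but otherwise the arguments coincide.
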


\begin{proof}
Fix time $t \geq 1$.

For Part 1, suppose that neuron $a$ fires at time $t$ in $\mathcal{A}^a_1$.
Then the firing threshold $s_V > 0$ of $a$ is met for time $t$, so it must be that neuron $v$ fires at time $t-1$ in $\mathcal{A}_1$.
Then Theorem~\ref{thm: firing} implies that at least $s_V m$ of the neurons in $copies(v)$ fire at time $t-1$ in $\mathcal{D}$.
Since the edges between these copies and $a$ in $\mathcal{D}^a$ all have weight $1/m$, and these edges do not fail, this means that the potential incoming to $a$ for time $t$ in $\mathcal{D}^a$ is at least $s_V$.
This is enough to meet $a$'s threshold of $s_V$, so $a$ fires at time $t$ in $\mathcal{D}^a$, as needed.

For Part 2, suppose that neuron $a$ does not fire at time $t$ in $\mathcal{A}^a_2$.
Then the firing threshold $s_V \leq 1$ of $a$ is not met for time $t$, so it must be that neuron $v$ does not fire at time $t-1$ in $\mathcal{A}_2$.
Then Theorem~\ref{thm: non-firing} implies that none of the neurons in $copies(v)$ fire
at time $t-1$ in $\mathcal{D}$.
This means that the potential incoming to $a$ for time $t$ in $\mathcal{D}^a$ is $0$.
This does not meet $a$'s threshold of $s_V > 0$, so $a$ does not fire at time $t$ in $\mathcal{D}^a$, as needed.
\end{proof}

\begin{example}
\emph{Line:}
\label{ex: line5}
Let $\mathcal{A}_1$ be the line network from Example~\ref{ex: line1}, with arbitrary $\lmax = 5$.
Let $\mathcal{A}_2$ be the corresponding lower-threshold version, and let $\mathcal{D}$ be the corresponding detailed version.
We assume arbitrary $m$, $s_V$, and $s_E$.
As observed before, for this example, the network $\mathcal{A}_2$ behaves identically to $\mathcal{A}_1$ on the same input pattern.

Now add the external actuator neuron $a$ to all three networks, by connecting neuron $\lmax$ in $\mathcal{A}_1$ and $\mathcal{A}_2$, and $copies(\lmax)$ in $\mathcal{D}$, to $a$.  
This yields $\mathcal{A}^a_1$, $\mathcal{A}^a_2$ and $\mathcal{D}^a$.

For the input pattern, let neuron $0$ fire at time $0$ in $\mathcal{A}^a_1$, and correspondingly for $\mathcal{A}^a_2$ and $\mathcal{D}$.
It is easy to see that, in the abstract networks $\mathcal{A}^a_1$ and $\mathcal{A}^a_2$, the actuator neuron $a$ fires at time $\lmax+1$ and at no other time.
Then Theorem~\ref{thm: actuator} implies that in $\mathcal{D}^a$, $a$ fires at time $\lmax+1$ and at no other time.
\end{example}

\section{Conclusions}
\label{sec: conclusions}

In this paper, we have illustrated how one might view brain algorithms using two levels of abstraction, where the higher, more abstract level is a simple, reliable model and the lower, more detailed level admits failures and compensates for them using redundancy.  We gave three theorems showing precise relationships between executions of the abstract and detailed models.
We presented this in the context of a particular discrete Spiking Neural Network (SNN) model, and a particular failure model (initial, permanent failures only), but we think that the general ideas should carry over to other network models and failure models. However, the technical notions of mapping for different models may be quite different, and this work remains to be done.

\paragraph{Future work:}
For SNNs, the first extension would be to accommodate more elaborate kinds of failures, such as allowing a bounded number of transient node and edge failures at each time $t$.  
We could also consider other anomalies such as spurious firing.

It remains to develop examples of how these correspondences can be used in interesting applications of SNNs; this paper contains only a few toy examples.  Applications to networks that solve interesting problems such as problems of decision-making or recognition would be desirable.

Also, as we noted in the Introduction, work by Valiant~\cite{Valiant} and work on the assembly calculus~\cite{PapadimitriouVempala, PVMM20} is carried out using multi-neuron representations.
It would be interesting to describe and analyze their algorithms using simpler, higher-level SNN models as an intermediate step.

Other types of neural network models should also be considered using levels of abstraction.
In particular, it would be interesting to study neural network models in which each neuron has a real-valued $rate$ state component that signifies its level of activity.
Such a component might change either continuously or in discrete steps.
Other real-valued components might also be included.
Such models are popular in some theoretical neuroscience research, such as~\cite{64cb611635384379b29c3433ed8f9ea6}.
In order to treat these models using levels of abstraction, we would have to introduce failures and redundancy into the detailed model.
This could end up being interesting, or tricky; for example, there is the danger that too much uncertainty in the detailed model might cause the rates in the detailed model to diverge unboundedly from those in the abstract model.
At any rate, this remains to be done.

\bibliography{Multi}

\end{document}